\newtheorem{theorem}{Theorem}
\let\SUP\textsuperscript
\title{Context-Aware Answer Extraction in Question Answering}
\author{Yeon Seonwoo\SUP{$\dagger$}{\normalfont ,} Ji-Hoon Kim\SUP{$\ddagger \mathsection$}{\normalfont ,} Jung-Woo Ha\SUP{$\ddagger \mathsection$}{\normalfont ,} Alice Oh\SUP{$\dagger$}\\
  \SUP{$\dagger$}KAIST\\
  \SUP{$\ddagger$}NAVER AI LAB, \SUP{$\mathsection$}NAVER CLOVA\\
  {\tt yeon.seonwoo@kaist.ac.kr}\\
  {\tt\{genesis.kim,jungwoo.ha\}@navercorp.com}\\
  {\tt alice.oh@kaist.edu}\\
}
\date{}
\begin{document}
\maketitle
\begin{abstract}
Extractive QA models have shown very promising performance in predicting the correct answer to a question for a given passage.
However, they sometimes result in predicting the correct answer text but in a context irrelevant to the given question.
This discrepancy becomes especially important as the number of occurrences of the answer text in a passage increases.
To resolve this issue, we propose \textbf{BLANC} (\textbf{BL}ock \textbf{A}ttentio\textbf{N} for \textbf{C}ontext prediction) based on two main ideas: context prediction as an auxiliary task in multi-task learning manner, and a block attention method that learns the context prediction task.
With experiments on reading comprehension, we show that BLANC outperforms the state-of-the-art QA models, and the performance gap increases as the number of answer text occurrences increases.
We also conduct an experiment of training the models using SQuAD and predicting the supporting facts on HotpotQA and show that BLANC outperforms all baseline models in this zero-shot setting.
\end{abstract}

\section{Introduction}
\begin{figure}
    \centering
    \includegraphics[width=\linewidth]{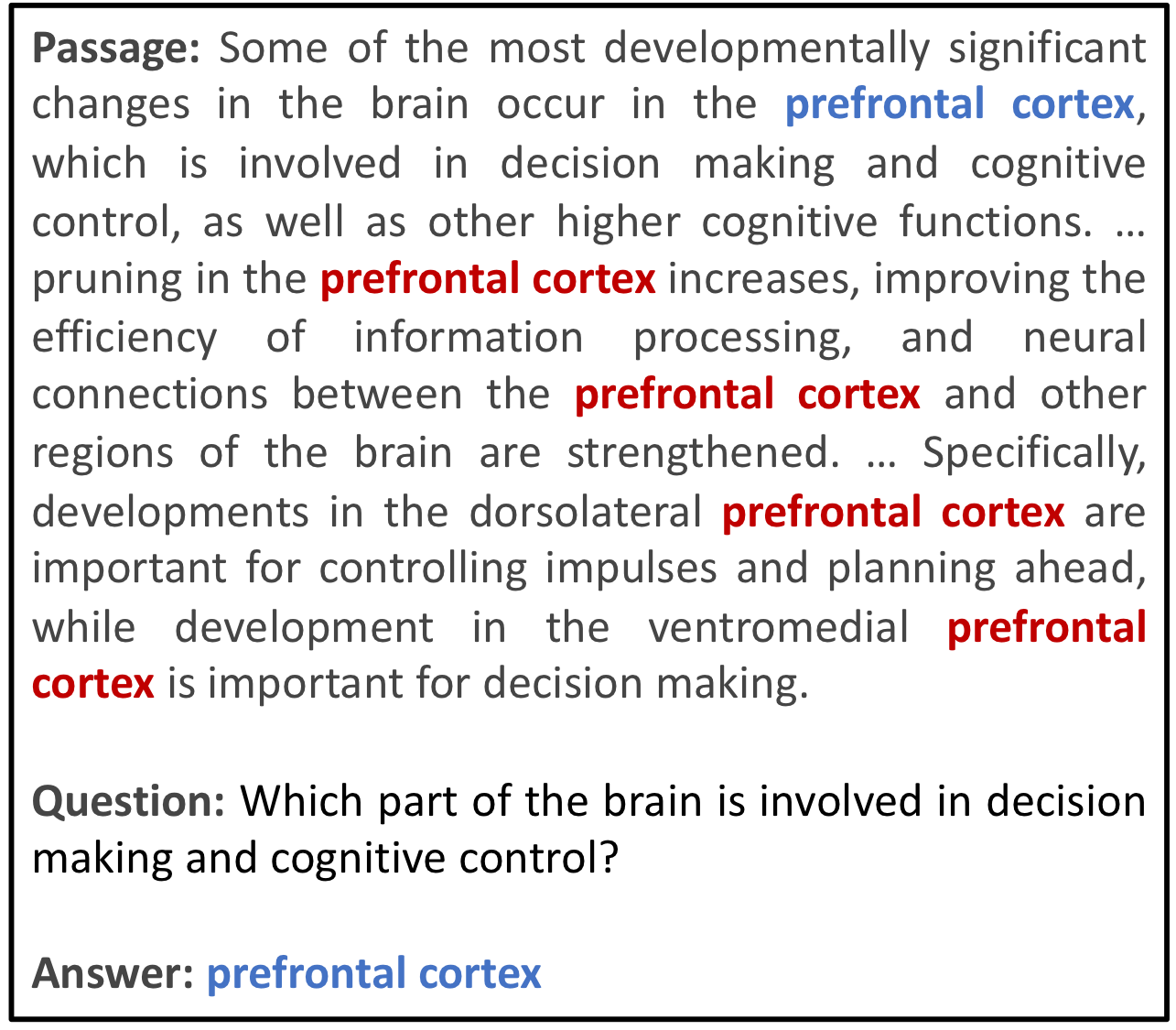}
    \caption{Example passage, question, and answer triple. This passage has multiple spans that are matched with the answer text. The first occurrence of ``prefrontal cortex" is the only answer-span within the context of the question.
    }
    \label{fig:example}
    \vspace{-1em}
\end{figure}

Question answering tasks require a high level of reading comprehension ability, which in turn requires a high level of general language understanding.
This is why the question answering (QA) tasks are often used to evaluate language models designed to be used in various language understanding tasks.
Recent advances in contextual language models brought on by attention \cite{hermann2015teaching, chen2016thorough, seobi, tay2018densely} and transformers \cite{vaswani2017attention} have led to significant improvements in QA, and these improvements show that better modeling of contextual meanings of words plays a key role in QA.

While these models are designed to select answer-spans in the relevant contexts from given passages, they sometimes result in predicting the correct answer text but in contexts that are irrelevant to the given questions.
Figure \ref{fig:example} shows an example passage where the correct answer text appears multiple times.
In this example, the only answer-span in the context relevant to the given question is the first occurrence of the ``prefrontal cortex" (in blue), and all remaining occurrences of the answer text (in red) show incorrect predictions.
Figure \ref{fig:sem-em} shows quantitatively, the discrepancy between predicting the correct answer text versus predicting the correct answer-span.
Using BERT \cite{devlin2019bert} trained on curated NaturalQuestions \cite{fisch2019mrqa}, we show the results of extractive QA task using exact match (EM) and Span-EM.
EM only looks for the text to match the ground truth answer, whereas Span-EM additionally requires the span to be the same as the ground truth answer-span.
Figure \ref{fig:sem-em} shows that BERT finds the correct answer text more than it finds the correct answer-spans, and this proportion of wrong predictions increases as the number of occurrences of answer text in a passage increases.

Tackling this problem is very important in more realistic datasets such as NaturalQuestions \cite{kwiatkowski2019natural}, where the majority of questions have more than one occurrence of the answer text in the passage.
This is in contrast with the SQuAD dataset, where most questions have a single occurrence of the answer. These details of the SQuAD \cite{rajpurkar2016squad}, NewsQA, and NaturalQuestions datasets \cite{fisch2019mrqa} are shown in Figure \ref{fig:n-answer-span}.

To address this issue, we define context prediction as an auxiliary task and propose a block attention method, which we call \textbf{BLANC} (\textbf{BL}ock \textbf{A}ttentio\textbf{N} for \textbf{C}ontext prediction) that explicitly forces the QA model to predict the context.
We design the context prediction task to predict soft-labels which are generated from given answer-spans.
The block attention method effectively calculates the probability of each word in a passage being included in the context with negligible extra parameters and inference time.
We provide the implementation of BLANC publicly available \footnote{\url{https://github.com/yeonsw/BLANC}}.

Adding context prediction and block attention enhances BLANC to correctly identify context related to a given question.
We conduct two types of experiments to verify the context differentiation performance of BLANC: extractive QA task, and zero-shot supporting facts prediction.
In the extractive QA task, we show that BLANC significantly increases the overall reading comprehension performance, and we verify the performance gain increases as the number of answer texts in a passage increases.
We verify BLANC's context-aware performance in terms of generalizability in the zero-shot supporting facts prediction task.
We train BLANC and baseline models on SQuAD1.1. and perform zero-shot supporting facts (supporting sentences in passages) prediction experiment on HotpotQA dataset \cite{yang2018hotpotqa}.
The results show that the context prediction performance that the model has learned from one dataset is generalizable to predicting the context of an answer to a question in another dataset.

Contributions in this paper are as follows: 
\begin{itemize}
    \item We show the importance of correctly identifying the answer-span to improving the model performance on extractive QA.
    \item We show that context prediction task plays a key role in the QA domain.
    \item We propose a new model BLANC that resolves the discrepancy between answer text prediction and answer-span prediction.
\end{itemize}

\begin{figure}[]
    \centering
    \includegraphics[width=0.85\linewidth]{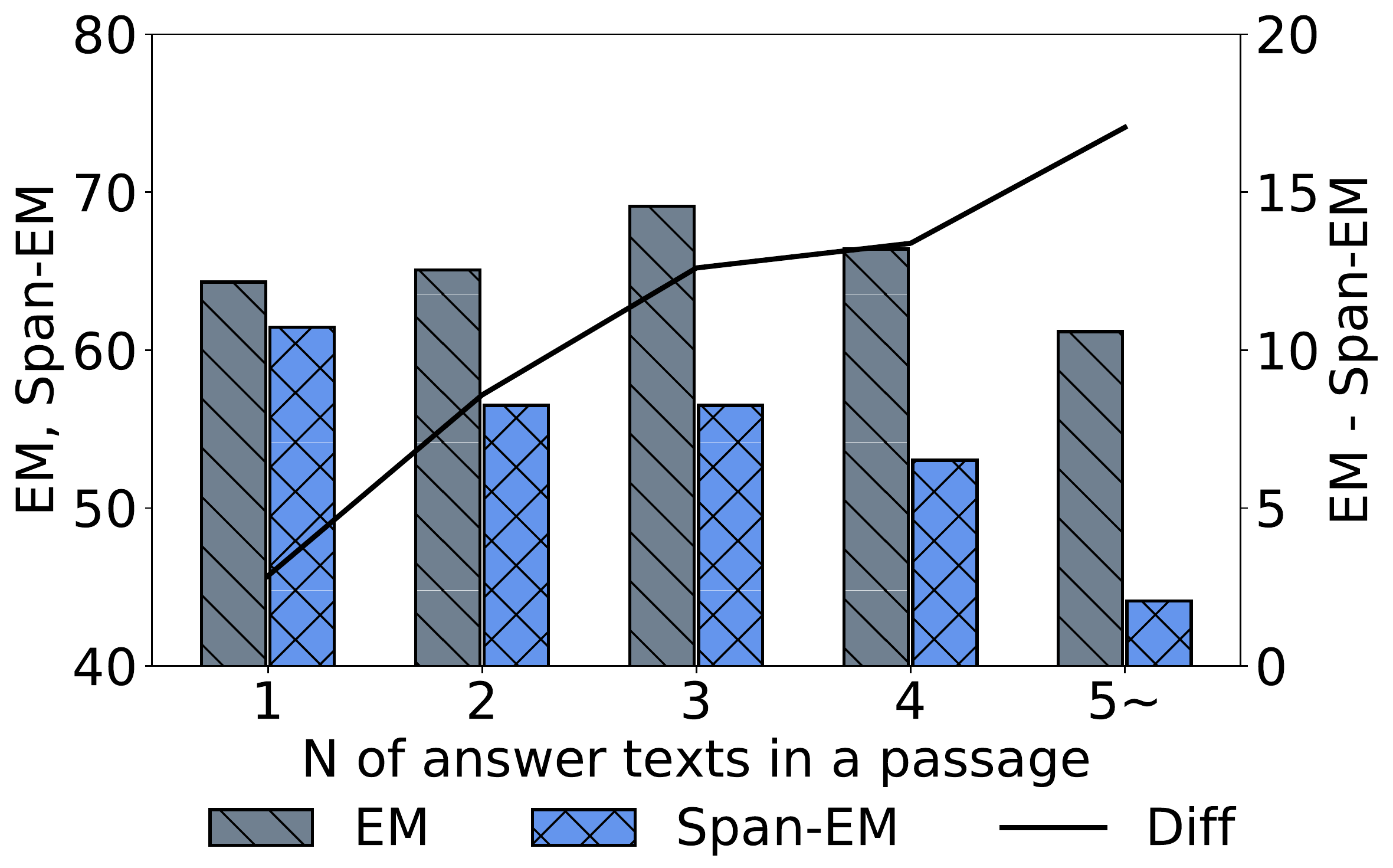}
    \caption{EM (text-matching) and Span-EM (span matching) of BERT on the groups divided by the number of answer text occurrences in a passage. Note: The difference for $N=1$ results from
    post-processing steps (removing articles) in EM evaluation.}
    \label{fig:sem-em}
    \vspace{-1em}
\end{figure}

\begin{figure}[t!]
    \centering
    \includegraphics[width=0.85\linewidth]{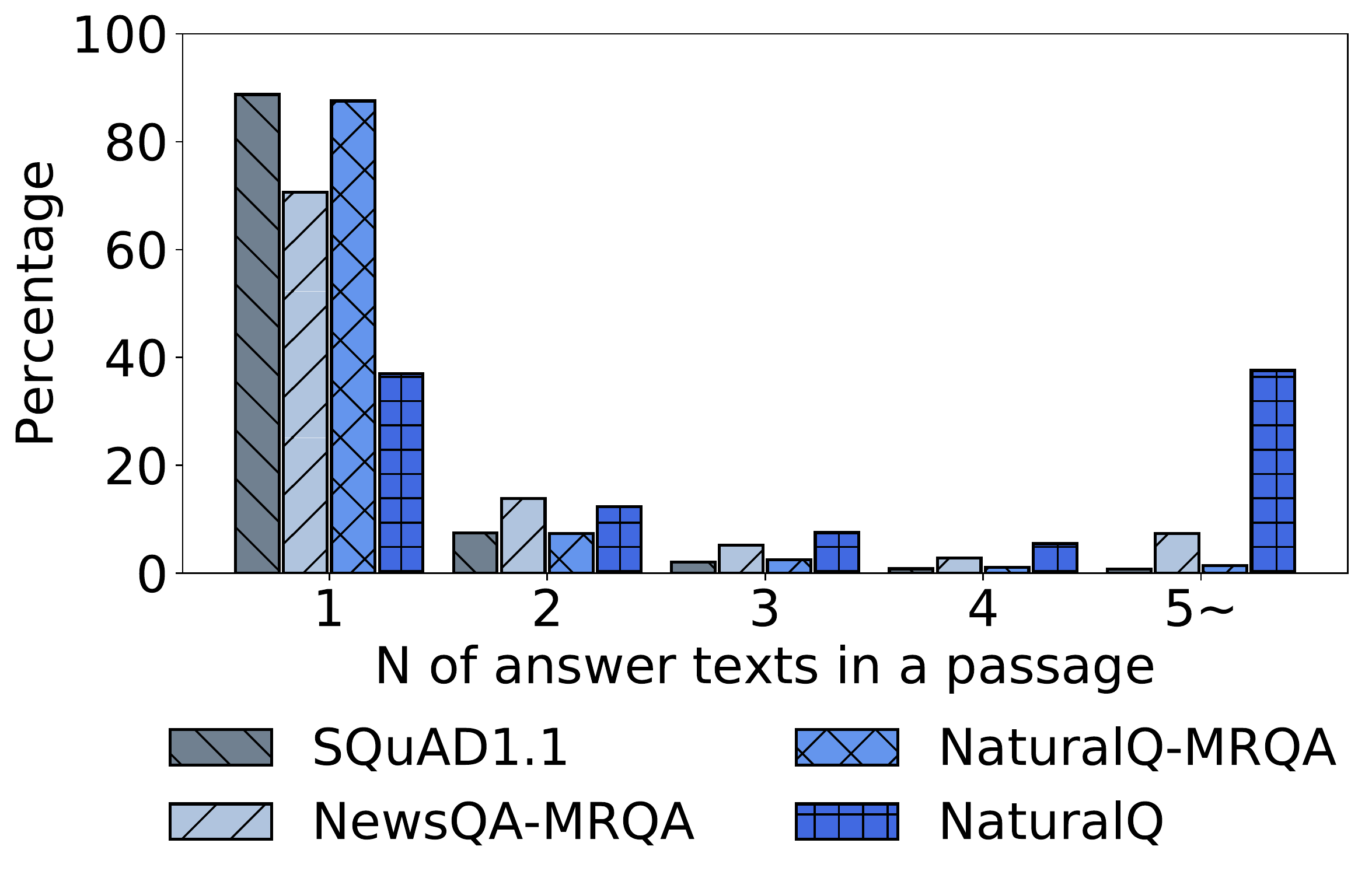}
    \caption{Proportions of questions with various numbers of the answer text in a passage. SQuAD has only a few examples for ($n\ge5$), while NaturalQuestions has a large proportion.}
    \label{fig:n-answer-span}
    \vspace{-1em}
\end{figure}
\section{Related Work}
Evidence in the form of documents, paragraphs, and sentences, has been shown to be necessary and effective in predicting the answers in open-domain QA \cite{chen2017reading, wang2018evidence, das2018multi, lee2019latent} and multi-hop QA \cite{yang2018hotpotqa, min2019multi, asai2020learning}.
One problem of identifying evidence in answering questions is the expensive cost in labeling the evidence.
Self-labeling with simple heuristics can be a solution to this problem, as shown in \newcite{choi2017coarse, li2018unified}.
Self-training is another solution, as presented in \newcite{niu2020self}.
In this paper, we propose self-generating soft-labeling method to indicate support words of answer texts, and train BLANC with the soft-labels.

Related but different from our work, \newcite{swayamdipta2018multi} and \newcite{min2019discrete} predict the answer-span when only the answer texts are provided and the ground truth answer-spans are not.
\newcite{swayamdipta2018multi} designs a model that benefits from aggregating information from multiple mentions of the answer text in predicting the final answer.
\newcite{min2019discrete} approach the problem of the lack of ground truth answer-spans with latent modeling of candidate spans.
Both of these papers tackle the problem of identifying the correct answer among multiple mentions of the answer text in datasets without annotations of the correct answer-spans.
Our work solves a different problem from the above-mentioned papers in that the golden answer-spans are provided.
\section{Model}
\begin{figure*}
    \centering
    \includegraphics[width=0.85\linewidth]{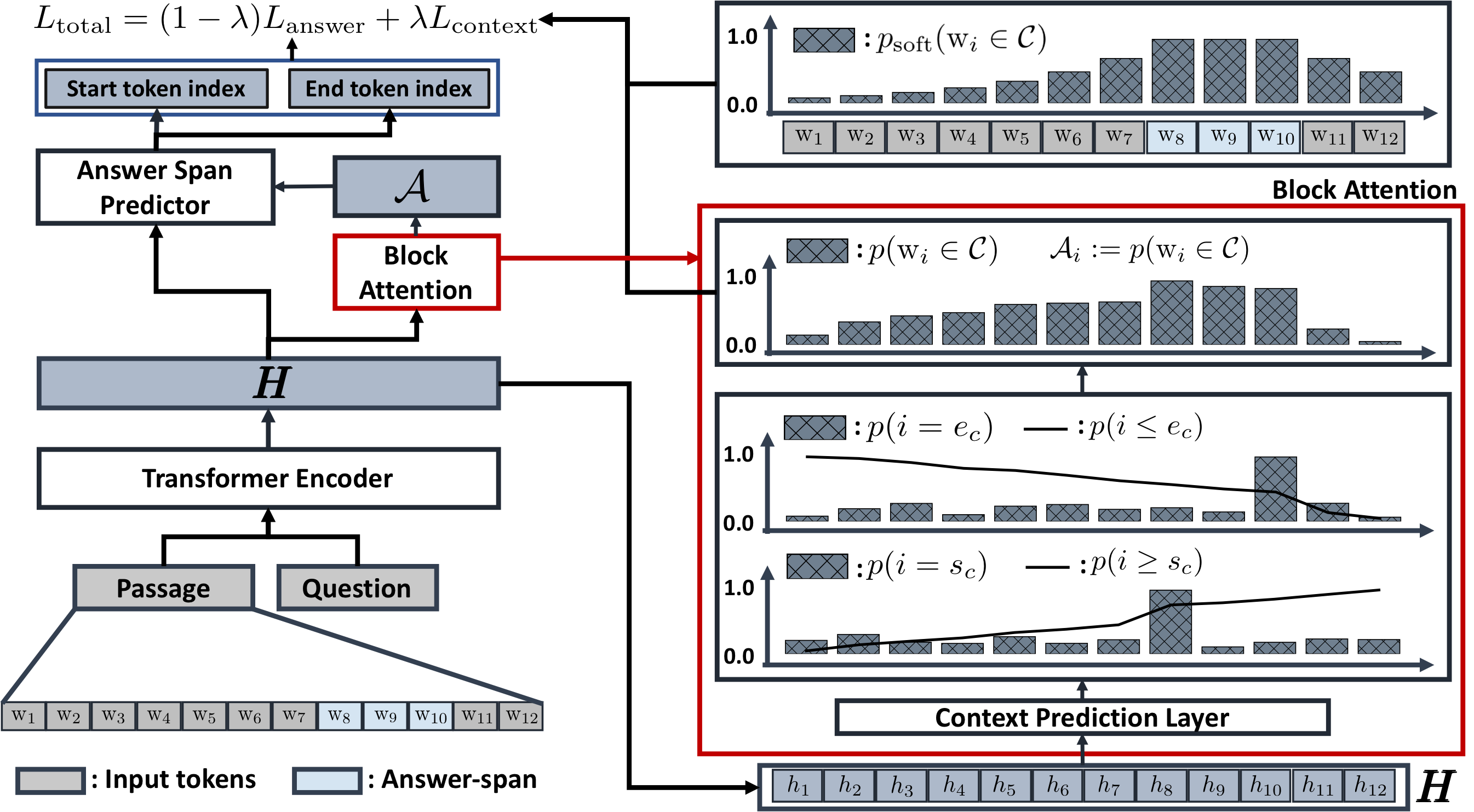}
    \caption{Schematic visualization of BLANC.
    Block attention model takes contextual vector representations from transformer encoder and predicts context words of an answer, $p(\text{w}_i \in \mathcal{C})$.
    We define loss function for context words with the prediction, 
    $p(\text{w}_i \in \mathcal{C})$ and the self-generated soft-label $p_{\text{soft}}(\text{w}_i \in \mathcal{C})$ defined in (\ref{eqn:pself}).
    Answer-span predictor takes $p(\text{w}_i \in \mathcal{C}$ and $\pmb{H}$ to predict an answer-span.
    We optimize our model in manner of multi-task learning of two tasks: answer-span prediction and context words prediction.
    }
    \label{fig:model}
    \vspace{-1em}
\end{figure*}
We propose BLANC based on two novel ideas: soft-labeling method for the context prediction and a block attention method that predicts the soft-labels.
Two important functionalities of BLANC are 1) calculating the probability that a word in a passage belongs to the context, which is in latent, and 2) enabling the probability to reflect spatial locality between adjacent words.
We provide an overall illustration of BLANC in Figure \ref{fig:model}.
\subsection{Notations}
In this section, we define the notations and the terms used in our study.
We denote a word at index $i$ in a passage with $\text{w}_i$.
We define the context of a given question as a segment of words in a passage and denote with $\mathcal{C}$.
In our circumstance, the context is latent.
We denote the start and end indices of a context with $s_{c}$ and $e_{c}$.
Training a block attention model to predict the context requires the labeling process for the latent context, and we define two probabilities for that, $p_{\text{soft}}(\text{w}_i \in \mathcal{C})$ and $p(\text{w}_i \in \mathcal{C})$.
$p_{\text{soft}}(\text{w}_i \in \mathcal{C})$ represents the self-generated soft-label that we assume as ground truth of the context, and $p(\text{w}_i \in \mathcal{C})$ is a block attention model's prediction.
We denote the start and end indices of a labeled answer-span with $s_a$ and $e_a$.
\subsection{Soft-labeling for latent context $\mathcal{C}$}\label{section:methodsl}
We assume words near an answer-span are likely to be included in the context of a given question.
From our assumption, we define the probability of words belong to the context, $p_{\text{soft}}(\text{w}_i \in \mathcal{C})$, which is used as a soft-label for the auxiliary context prediction task.
To achieve this, we hypothesize the words in an answer-span are included in the context and make the probability of adjacent words decrease with a specific ratio as the distance between answer-span and a word increases.
The soft-label for the latent context is as follows: 
\begin{equation}\label{eqn:pself}
    p_{\text{soft}}(\text{w}_i \in \mathcal{C}) =
    \begin{cases}
        1.0 & \text{if } i \in [s_a, e_a]\\
        q^{|i-s_a|} & \text{if } i < s_a\\
        q^{|i-e_a|} & \text{if } i > e_a,
    \end{cases}
\end{equation}
where $0 \le q \le 1$, and $q$ is a hyper-parameter for the decreasing ratio as the distance from a given answer-span.
For computational efficiency, we apply (\ref{eqn:pself}) to words bounded by certain window-size only, which is a hyper-parameter, on both sides of an answer-span.
This results in assigning $p_{\text{soft}}(\text{w}_i \in \mathcal{C})$ to 0 for the words outside the segment bounded by the window-size.

\subsection{Block Attention}\label{section:ba}
Block attention model calculates $p(\text{w}_i \in \mathcal{C})$ to predict the soft-label, $p_{\text{soft}}(\text{w}_i \in \mathcal{C})$, and localizes the correct index of an answer-span with $p(\text{w}_i \in \mathcal{C})$.
We embed spatial locality of $p(\text{w}_i \in \mathcal{C})$ to block attention model with the following steps: 1) predicting the start and end indices of context, $p(i = s_c)$ and $p(i = e_c)$, and 2) calculating $p(\text{w}_i \in \mathcal{C})$ with cumulative distribution of $p(i = s_c)$ and $p(i = e_c)$.
In the first step, at predicting the start and end indices, all encoder models that produce vector representation of words in a passage are compatible with the block attention model.
In this paper, we apply the same structure of the answer-span classification layer used in the transformer model \cite{devlin2019bert} to our context words prediction layer.
\begin{equation}
    \pmb{H} = \text{Encoder}(\text{Passage}, \text{Question})    
\end{equation}
Here, we denote $\pmb{H}$ as output vectors of transformer encoder and $\pmb{H}_j$ as output vector of $\text{w}_j$.
From $\pmb{H}$, we predict the start and end indices of the context:
\begin{equation}\label{eqn:context_pred}
    \begin{split}
        p(i=s_c) &= \frac{\text{exp}(\pmb{W_c}\pmb{H}_i + b_s^c)}{\sum_{j}{\text{exp}(\pmb{W_c}\pmb{H}_j + b_s^c)}},\\
        p(i=e_c) &= \frac{\text{exp}(\pmb{V_c}\pmb{H}_i + b_e^c)}{\sum_{j}{\text{exp}(\pmb{V_c}\pmb{H}_j + b_e^c)}},
    \end{split}
\end{equation}
where $\pmb{W_c}$, $\pmb{V_c}$, $b_s^c$, and $b_e^c$ represent weight and bias parameters for context prediction layer.
We calculate $p(\text{w}_i \in \mathcal{C})$ as multiplication of the probability of the word $\text{w}_i$ which appears after $s_c$ and that of the word $\text{w}_i$ which appears before $e_c$.
\begin{equation}\label{eqn:context}
    p(\text{w}_i \in \mathcal{C}) = p(i \ge s_c) \times p(i \le e_c).
\end{equation}
Here, we assume the independence between $s_c$ and $e_c$ for computational conciseness.
The cumulative distributions of $p(i \ge s_c)$ and $p(i \le e_c)$ are calculated with the following equations:
\begin{equation}
    \begin{split}
        p(i \ge s_c) &= \sum_{j \le i}{p(j = s_c)}\\
        p(i \le e_c) &= \sum_{j \ge i}{p(j = e_c)}.
    \end{split}
\end{equation}
We explicitly force the block attention model to learn context words of a given question by minimizing the cross-entropy of the two probabilities, $p(\text{w}_i \in \mathcal{C})$ and $p_{\text{soft}}(\text{w}_i \in \mathcal{C})$.
The loss function for the latent context is defined by the following equation:
\begin{equation}
    \begin{split}
        L_{\text{context}} = &-\sum_{1 \le i \le l}{p_{\text{soft}}(\text{w}_i \in \mathcal{C}) \log{p(\text{w}_i \in \mathcal{C})}}\\
        &-\sum_{1 \le i \le l}{p_{\text{soft}}(\text{w}_i \notin \mathcal{C}) \log{p(\text{w}_i \notin \mathcal{C})}},
    \end{split}
\end{equation}
\noindent where $l$ is the length of a passage.
By averaging $L_{\text{context}}$ across all train examples, we get the final context loss function.

\subsection{Answer-span Prediction}
BLANC predicts answer-span with the context probability, $p(\text{w}_i \in \mathcal{C})$.
We use the same answer-span prediction layer as BERT, but we multiply $p(\text{w}_i \in \mathcal{C})$ to the output of the encoder, $\pmb{H}$ to give attention at indices of answer-span within the context, $\mathcal{C}$.
\begin{equation}
    \begin{split}
        p(i=s_a) &= \frac{\text{exp}(\mathcal{A}_i\pmb{W_a}\pmb{H}_i + b_s^a)}{\sum_{j}{\text{exp}(\mathcal{A}_j\pmb{W_a}\pmb{H}_j + b_s^a)}},\\
        p(i=e_a) &= \frac{\text{exp}(\mathcal{A}_i\pmb{V_a}\pmb{H}_i + b_e^a)}{\sum_{j}{\text{exp}(\mathcal{A}_j\pmb{V_a}\pmb{H}_j + b_e^a)}},
    \end{split}
\end{equation}
where $\pmb{W_a}$, $\pmb{V_a}$, $b_s^a$, and $b_e^a$ represent weight and bias parameters for answer-span prediction layer, and $\mathcal{A}_i=p(\text{w}_i \in \text{context})$.
The loss function for answer-span prediction is defined by the following equation:
\begin{equation}
    \begin{split}
        L_{\text{answer}} = -\frac{1}{2}\{&\sum_{1 \le i \le l}{\mathbbm{1}(i=s_{a})\log{p(i=s_a)}}\\
        +&\sum_{1 \le i \le l}{\mathbbm{1}(i=e_{a})\log{p(i=e_a)}}\}.
    \end{split}
\end{equation}
$\mathbbm{1}(\text{condition})$ represents an indicator function that returns 1 if the condition is true and returns 0 otherwise.
By averaging $L_{\text{answer}}$ across all train examples, we get the final answer-span loss function.
We define our final loss function as the weighted sum of the two loss functions:
\begin{equation}
    L_{\text{total}} = (1-\lambda) L_{\text{answer}} + \lambda L_{\text{context}},
\end{equation}
\noindent where $\lambda$ is a hyper-parameter moderating the ratio of two loss functions.

\subsection{Property of Block Attention}
${p_{\text{soft}}(\text{w}_i \in \mathcal{C})}$ defined at (\ref{eqn:pself}) can be represented by the probability distributions calculated by block attention model, $p(\text{w}_i \in \mathcal{C})$.
We provide detailed proof in Appendix~\ref{section:appendix_softlabel}.
\section{Experimental Setup}\label{secion:exp_set}
We validate the efficacy of BLANC on two types of tasks: extractive QA and zero-shot supporting fact prediction. 
In the extractive QA, we evaluate the overall reading comprehension performance with three QA datasets, and we further analyze the ability of BLANC to discern relevant contexts on passages with multiple answer texts.
In zero-shot supporting facts prediction, we train QA models on SQuAD \cite{rajpurkar2016squad} and predict supporting facts (supporting sentences) of answers in HotpotQA \cite{yang2018hotpotqa}.
Due to our experimental computing resource limitation, we compare BLANC to baseline models trained in slightly modified hyperparameter settings instead of the results from their original papers.

\begin{table*}[t!]
\centering
\begin{tabular}{@{}llrcccc@{}}
\toprule
                            &                                    & \multicolumn{1}{l}{\#Param}  & Span-F1                                        & Span-EM                                        & F1                                             & EM                                             \\ \midrule
                            & BERT                               & 108M                         & 72.92 $\pm$ 0.36                                  & 60.63 $\pm$ 0.39                                  & 76.39 $\pm$ 0.26                                  & 64.48 $\pm$ 0.28                                  \\
                            & ALBERT                             & 17M                          & 72.66 $\pm$ 0.48                                  & 60.31 $\pm$ 0.49                                  & 75.89 $\pm$ 0.36                                  & 63.81 $\pm$ 0.37                                  \\
                            & RoBERTa                            & 124M                         & 75.07 $\pm$ 0.17                                  & 62.59 $\pm$ 0.14                                  & 78.54 $\pm$ 0.20                                  & 66.33 $\pm$ 0.09                                  \\
                            & SpanBERT                           & 108M                         & 75.16 $\pm$ 0.26                                  & 62.71 $\pm$ 0.37                                  & 78.31 $\pm$ 0.22                                  & 66.60 $\pm$ 0.31                                  \\
                            & \cellcolor[HTML]{C0C0C0}BLANC      & \cellcolor[HTML]{C0C0C0}108M & \cellcolor[HTML]{C0C0C0}\textbf{76.99 $\pm$ 0.09} & \cellcolor[HTML]{C0C0C0}\textbf{64.57 $\pm$ 0.12} & \cellcolor[HTML]{C0C0C0}\textbf{80.04 $\pm$ 0.06} & \cellcolor[HTML]{C0C0C0}\textbf{68.33 $\pm$ 0.09} \\ \cmidrule(l){2-7} 
                            & $\text{SpanBERT}_{\text{large}}$                      & 333M                         & 77.62 $\pm$ 0.10                                  & 65.28 $\pm$ 0.41                                  & 80.66 $\pm$ 0.11                                  & 69.14 $\pm$ 0.18                                  \\
\multirow{-7}{*}{NaturalQA} & \cellcolor[HTML]{C0C0C0}$\text{BLANC}_{\text{large}}$ & \cellcolor[HTML]{C0C0C0}333M & \cellcolor[HTML]{C0C0C0}\textbf{79.04 $\pm$ 0.27} & \cellcolor[HTML]{C0C0C0}\textbf{66.75 $\pm$ 0.14} & \cellcolor[HTML]{C0C0C0}\textbf{81.99 $\pm$ 0.16} & \cellcolor[HTML]{C0C0C0}\textbf{70.59 $\pm$ 0.12} \\ \midrule
                            & BERT                               & 108M                         & 83.36 $\pm$ 0.25                                  & 70.74 $\pm$ 0.43                                  & 88.10 $\pm$ 0.14                                  & 80.49 $\pm$ 0.28                                  \\
                            & ALBERT                             & 17M                          & 84.60 $\pm$ 0.13                                  & 72.04 $\pm$ 0.38                                  & 88.75 $\pm$ 0.20                                  & 81.05 $\pm$ 0.27                                  \\
                            & RoBERTa                            & 124M                         & 85.21 $\pm$ 0.25                                  & 72.82 $\pm$ 0.56                                  & 89.91 $\pm$ 0.16                                  & 82.53 $\pm$ 0.44                                  \\
                            & SpanBERT                           & 108M                         & 86.67 $\pm$ 0.16                                  & 74.08 $\pm$ 0.13                                  & 91.58 $\pm$ 0.09                                  & 84.97 $\pm$ 0.18                                  \\
                            & \cellcolor[HTML]{C0C0C0}BLANC      & \cellcolor[HTML]{C0C0C0}108M & \cellcolor[HTML]{C0C0C0}\textbf{86.89 $\pm$ 0.15} & \cellcolor[HTML]{C0C0C0}\textbf{74.69 $\pm$ 0.37} & \cellcolor[HTML]{C0C0C0}\textbf{91.87 $\pm$ 0.13} & \cellcolor[HTML]{C0C0C0}\textbf{85.30 $\pm$ 0.32} \\ \cmidrule(l){2-7} 
                            & $\text{SpanBERT}_{\text{large}}$                      & 333M                         & 88.27 $\pm$ 0.14                                  & 75.96 $\pm$ 0.22                                  & 93.22 $\pm$ 0.08                                  & 87.14 $\pm$ 0.11                                  \\
\multirow{-7}{*}{SQuAD1.1}  & \cellcolor[HTML]{C0C0C0}$\text{BLANC}_{\text{large}}$ & \cellcolor[HTML]{C0C0C0}333M & \cellcolor[HTML]{C0C0C0}\textbf{88.42 $\pm$ 0.17} & \cellcolor[HTML]{C0C0C0}\textbf{76.26 $\pm$ 0.31} & \cellcolor[HTML]{C0C0C0}\textbf{93.37 $\pm$ 0.05} & \cellcolor[HTML]{C0C0C0}\textbf{87.30 $\pm$ 0.10} \\ \midrule
                            & BERT                               & 108M                         & 59.18 $\pm$ 0.57                                  & 45.53 $\pm$ 0.55                                  & 65.07 $\pm$ 0.52                                  & 50.11 $\pm$ 0.50                                  \\
                            & ALBERT                             & 17M                          & 60.12 $\pm$ 0.36                                  & 46.54 $\pm$ 0.04                                  & 66.02 $\pm$ 0.35                                  & 51.18 $\pm$ 0.18                                  \\
                            & RoBERTa                            & 124M                         & 61.36 $\pm$ 0.63                                  & 47.43 $\pm$ 0.54                                  & 67.28 $\pm$ 0.63                                  & 52.36 $\pm$ 0.64                                  \\
                            & SpanBERT                           & 108M                         & 62.26 $\pm$ 0.22                                  & 48.04 $\pm$ 0.48                                  & 67.93 $\pm$ 0.26                                  & 52.85 $\pm$ 0.49                                  \\
                            & \cellcolor[HTML]{C0C0C0}BLANC      & \cellcolor[HTML]{C0C0C0}108M & \cellcolor[HTML]{C0C0C0}\textbf{64.39 $\pm$ 0.76} & \cellcolor[HTML]{C0C0C0}\textbf{50.60 $\pm$ 0.50} & \cellcolor[HTML]{C0C0C0}\textbf{70.31 $\pm$ 0.66} & \cellcolor[HTML]{C0C0C0}\textbf{55.52 $\pm$ 0.43} \\ \cmidrule(l){2-7} 
                            & $\text{SpanBERT}_{\text{large}}$                      & 333M                         & 63.43 $\pm$ 0.42                                  & 49.03 $\pm$ 0.13                                  & 69.06 $\pm$ 0.55                                  & 53.84 $\pm$ 0.27                                  \\
\multirow{-7}{*}{NewsQA}    & \cellcolor[HTML]{C0C0C0}$\text{BLANC}_{\text{large}}$ & \cellcolor[HTML]{C0C0C0}333M & \cellcolor[HTML]{C0C0C0}\textbf{66.48 $\pm$ 0.20} & \cellcolor[HTML]{C0C0C0}\textbf{52.39 $\pm$ 0.08} & \cellcolor[HTML]{C0C0C0}\textbf{72.36 $\pm$ 0.01} & \cellcolor[HTML]{C0C0C0}\textbf{57.40 $\pm$ 0.21} \\ \bottomrule
\end{tabular}
\caption{\label{tab:main} Reading comprehension performance of baseline models and BLANC.
We conduct experiments on three QA datasets: NaturalQ, SQuAD1.1, and NewsQA.
For all evaluation metrics, we report mean and standard deviation of three separate trials.
The results show that BLANC outperforms baseline models.
}
\vspace{-1em}
\end{table*}

\subsection{Datasets}
\paragraph{SQuAD:}
SQuAD1.1 \cite{rajpurkar2016squad} is a large reading comprehension dataset for QA.
Since the test set for SQuAD1.1 \cite{rajpurkar2016squad} is not publicly available, and their benchmark does not provide an evaluation on the span-based metric, we split train data (90\%/10\%) into new train/dev dataset and use development dataset as test dataset.
\paragraph{NewsQA \& NaturalQ:}
NewsQA \cite{trischler2017newsqa} consists of answer-spans to questions generated in a way that reflects realistic information seeking processes in the news domain.
NaturalQuestions \cite{kwiatkowski2019natural} is a QA benchmark in a real-world scenario with Google search queries for naturally-occurring questions and passages from Wikipedia for annotating answer-spans.
Due to computational limits, we use the curated versions of NewsQA and NaturalQ provided by \newcite{fisch2019mrqa}.
The curated datasets contain train and development set only, so we use the development set as the test set and build new train and dev sets from the train set (90\%/10\%).
\paragraph{HotpotQA:}
HotpotQA \cite{yang2018hotpotqa} aims to measure complex reasoning performance of QA models and requires finding relevant sentences from the given passages.
HotpotQA consists of passages, questions, answer, and corresponding supporting facts (sentences) for each answer.
We use the development set in HotpotQA.

\subsection{Evaluation Metrics}
F1 and EM are evaluation metrics widely used in existing QA models \cite{rajpurkar2016squad}.
These two metrics measure the number of overlapping tokens between the predicted answers and the ground truth answers.
Token matching evaluation treats as correct even answers in unrelated contexts, thus being insufficient to evaluate the context prediction performance.
As the alternatives, we propose span-EM and span-F1.
We modify the metric proposed in \newcite{kwiatkowski2019natural} to be suitable for our experiment setting.
\paragraph{Span-F1 and Span-EM:}
Span-F1 and span-EM are defined with overlapping indices between the predicted span and the ground truth span:
\begin{displaymath}
    \begin{split}
        \text{Span-P} &= {|[s_p, e_p] \cap [s_g, e_g]|}/{|[s_p, e_p]|}\\
        \text{Span-R} &= {|[s_p, e_p] \cap [s_g, e_g]|}/{|[s_g, e_g]|}\\
        \text{Span-F1} &= 2\times\frac{\text{Span-P}\times{\text{Span-R}}}{\text{Span-P}+{\text{Span-R}}}\\
        \text{Span-EM} &= \mathbbm{1}(s_p=s_g \wedge e_p=e_g)\\
    \end{split}
\end{displaymath}
Here, $s_p$ / $e_p$ represent the start/end indices of a predicted answer-span in a passage and $s_g$ / $e_g$ denote the start/end indices of the ground truth answer-span in a passage.
Span-EM measures exactly matched predicted spans, and Span-F1 quantifies the degree of overlap between the predicted answer-span and the ground truth span.

\subsection{Baselines}
\paragraph{BERT, RoBERTa, and ALBERT:}
BERT \cite{devlin2019bert}, RoBERTa \cite{liu2019roberta}, and ALBERT \cite{lan2019albert} are language models built upon the transformer encoder.
They use the same model structure, except for the bigger vocabulary size of RoBERTa.
Due to the computational limitation, we use 12-layer base models for BERT, and RoBERTa and 24-layer large model for ALBERT.
\paragraph{SpanBERT:}
SpanBERT \cite{joshi2020spanbert} has the same model structure and the same parameter size as BERT.
SpanBERT uses span-oriented pre-training for span representation.
Since the block attention is stacked on SpanBERT, and to provide detailed results of effectiveness of BLANC, we use both 12-layer SpanBERT-base and 24-layer SpanBERT-large.

\subsection{Hyper-parameter Settings}
We conduct experiments on limited hyper-parameter settings (e.g. max\_len, batch size), as we were limited by computational resources.  
We use the same hyperparameter settings across all baseline models and BLANC.
We set the training batch size to 8, learning-rate to $2\times e^{-5}$, the number of train epochs to 3, the max sequence length of transformer encoder to 384, warm-up proportion to 10\%, and we use the various optimizers used in the respective original papers.
We set $\lambda$ to 0.8, which is the optimal value as we show in Figure \ref{fig:abl_lmb}, for all experiments except the large model experiment on SQuAD1.1.
We set $\lambda=0.2$ in the large model experiment on SQuAD1.1.
We use different $q$, the decreasing ratio in (\ref{eqn:pself}), and different window-size for each dataset to reflect the average length of passages of each QA datasets.
We set $q=0.7$ and window-size to $2$ on SQuAD which contains relatively short passages, and $q=0.99$ and window-size to $3$ on the other two QA datasets where most passages are longer than SQuAD.
$q$ and window-size are optimized empirically.

\section{Results \& Discussion}\label{section:results}

\begin{figure}
    \centering
    \begin{subfigure}[b]{.40\textwidth}
        \centering
        \includegraphics[width=\linewidth]{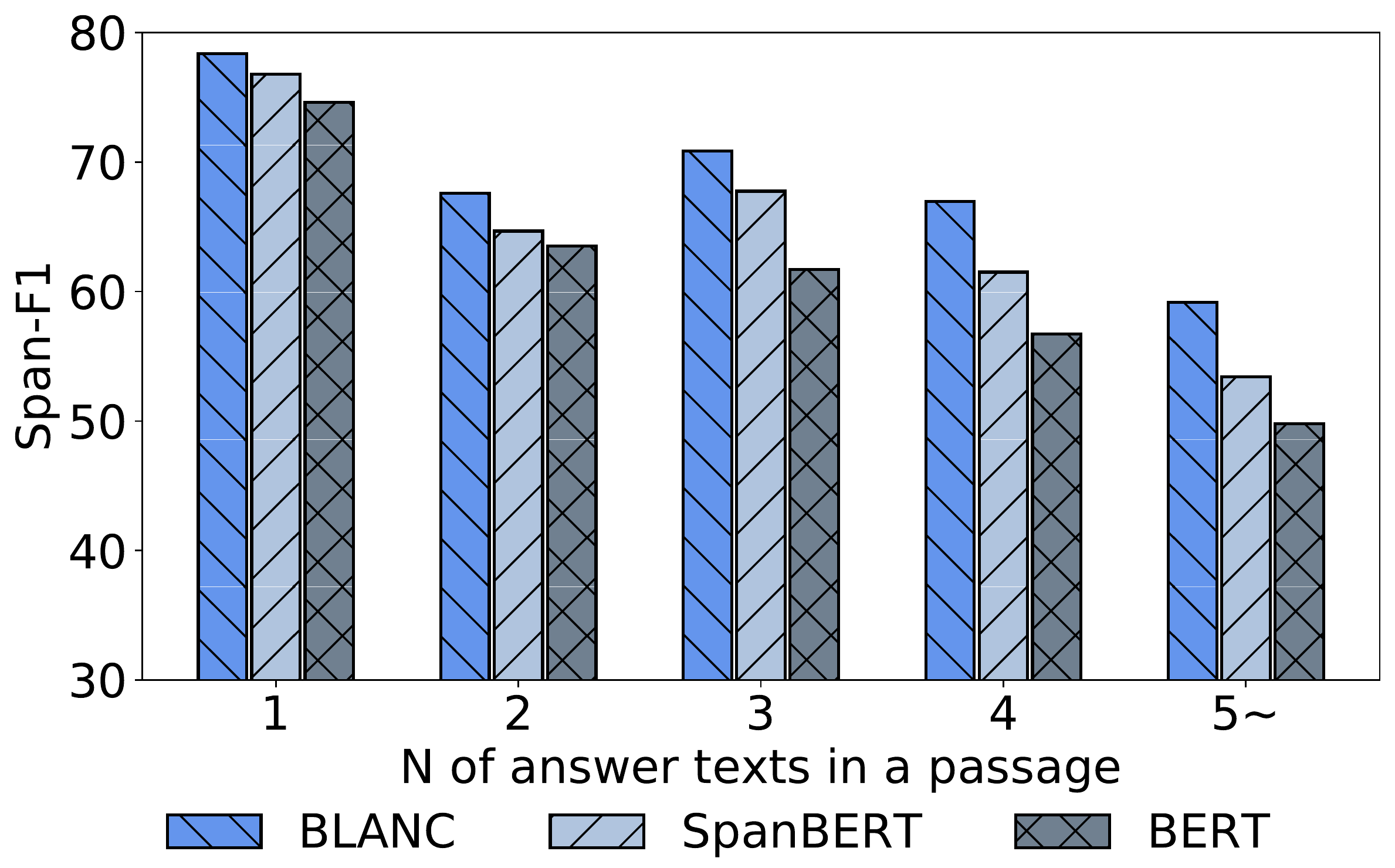}
    \end{subfigure}
    \begin{subfigure}[b]{.40\textwidth}
        \centering
        \includegraphics[width=\linewidth]{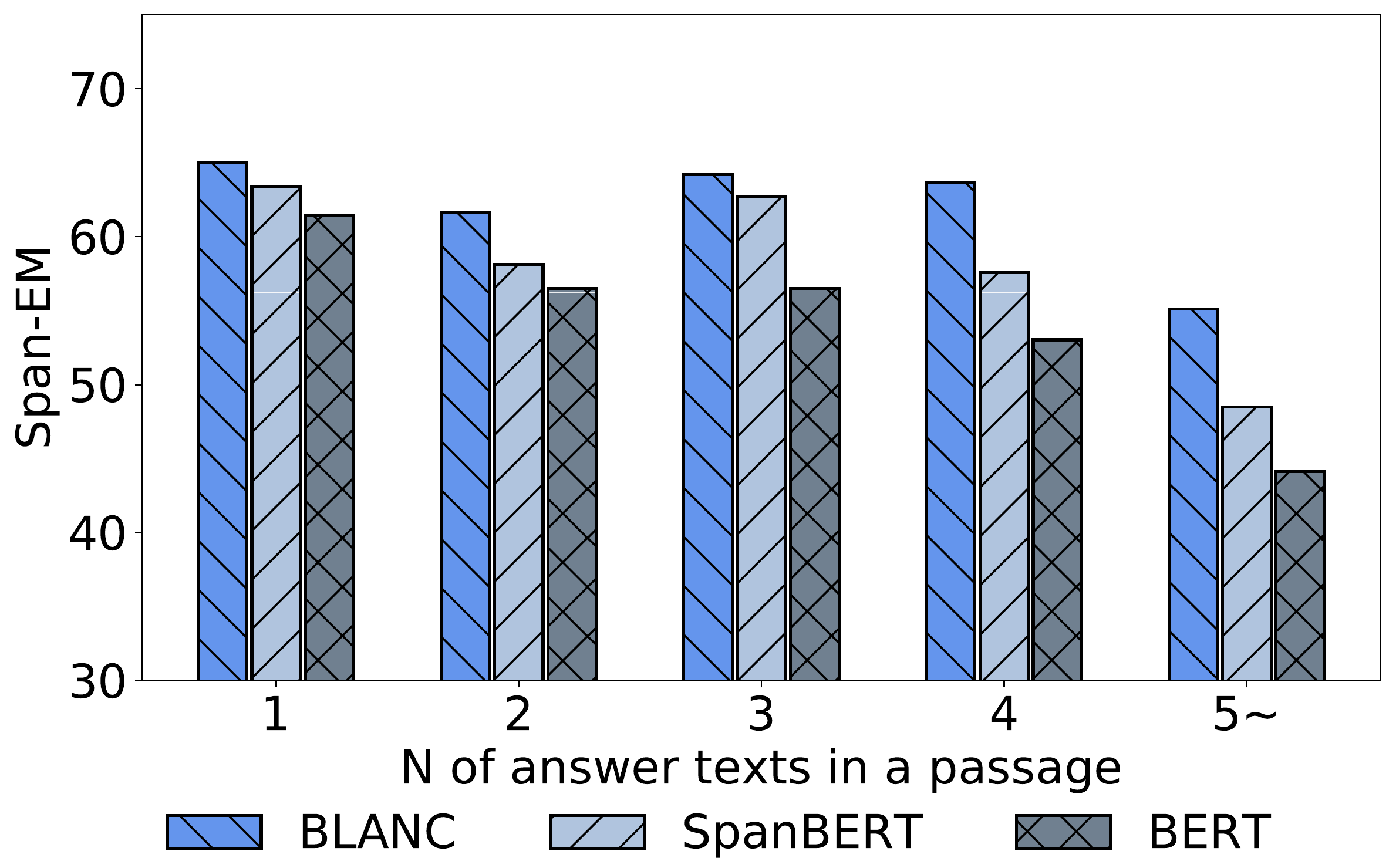}
    \end{subfigure}
    \caption{Span-F1 and Span-EM of baseline models and BLANC trained on NaturalQ.
    We categorize NaturalQ dataset into five groups by number answer texts appeared in a passage: $n=1, 2, 3, 4$, and $n\ge5$.
    BLANC outperforms baseline models on every groups and the performance gap increases as the number of answer texts in a passage increases.}
    \label{fig:perf-n-answer}
    \vspace{-1.4em}
\end{figure}

\begin{table}[]
\centering
\begin{tabular}{@{}lcc@{}}
\toprule
         & Span-F1               & Span-EM               \\ \midrule
RoBERTa  & 65.99 $\pm$ 0.92          & 60.12 $\pm$ 0.86          \\
SpanBERT & 63.47 $\pm$ 0.72          & 57.63 $\pm$ 0.79          \\
\rowcolor[HTML]{C0C0C0} 
BLANC      & \textbf{67.07 $\pm$ 0.36} & \textbf{61.43 $\pm$ 0.38} \\ \bottomrule
\end{tabular}
\caption{\label{tab:perf_on_n_ge_2}
Performance of BLANC on passages of NaturalQ that have answer texts two or more.}
\vspace{-1.5em}
\end{table}

We now present the results for the experiments described in the previous section. We describe the overall reading comprehension performance, highlighting the increased gain for passages with multiple mentions of the answer text. We show that BLANC outperforms other models for zero-shot supporting fact prediction. We also demonstrate the importance of the context prediction loss and the negligible extra parameter and inference time.  

\subsection{Reading Comprehension}\label{section:rc}
We verify the reading comprehension performance of BLANC with four evaluation metrics (F1, EM, Span-F1, and Span-EM) on three QA datasets: SQuAD, NaturalQ, and NewsQA.
We show the results in Table \ref{tab:main} which shows BLANC
consistently outperforms all comparison models including RoBERTa and SpanBERT.

We focus on the evaluation metric Span-EM which measures the exact match of the answer-span, and we further highlight the performance gain of BLANC over the most recent SpanBERT model, both base and large. 
On NaturalQ, BLANC outperforms SpanBERT by 1.86, whereas the performance difference between SpanBERT and RoBERTa is 0.12. On NewsQA, BLANC outperforms by 2.56, whereas the difference between SpanBERT and RoBRTa is 0.61. This pattern holds for the large models as well.

We now compare the performance gain between the datasets.
Recall that we showed in Figure \ref{fig:n-answer-span} the proportion of multi-mentioned answer is smallest in SQuAD, medium for NaturalQ-MRQA, and largest in NewsQA-MRQA. Reading comprehension results show the performance gap of BLANC and SpanBERT increases in the same order, verifying the effectiveness of BLANC on the realistic multi-mentioned datasets.

\subsection{Performance on Passages with Multi-mentioned Answers}
In Section \ref{section:rc}, we show Span-EM and EM of BLANC and baselines on the entire datasets.
However, the context discerning performance is only observed on passages with multiple mentions of the answer text.
We investigate the context-aware performance (distinguishing relevant context and irrelevant context) of BLANC by categorizing NaturalQ dataset by the number of occurrences of the answer text in a passage.
We subdivide the dataset into five groups: $n=1, 2, 3, 4$ and $n \ge 5$, where $n$ is the number of occurrences of the answer text in a passage.
Figure \ref{fig:perf-n-answer} presents Span-F1 and Span-EM on those subsets of the data.
BLANC outperforms SpanBERT and BERT across all subsets, and we show that the performance gain increases as $n$ increases.
In Table \ref{tab:perf_on_n_ge_2}, we explicitly show reading comprehension performance of BLANC on the question-answer pairs of passages with $n \ge 2$ from NaturalQ, and we confirm that block attention method increases context-aware performance of SpanBERT by 3.6 with Span-F1, and by 3.8 with Span-EM, which are larger improvements than the increments on the data including $n=1$ shown in Table \ref{tab:main}.

\begin{table}[]
\centering
\begin{tabular}{@{}lr@{}}
\toprule
\multicolumn{1}{c}{} & \multicolumn{1}{c}{Accuracy} \\ \midrule
BERT                 & 33.34 $\pm$ 0.82                 \\
$\text{ALBERT}_{\text{large}}$               & 35.62 $\pm$ 1.17                 \\
RoBERTa              & 37.93 $\pm$ 0.80                 \\
SpanBERT             & 34.79 $\pm$ 0.40                 \\
\rowcolor[HTML]{C0C0C0} 
BLANC                  & \textbf{39.80 $\pm$ 1.18}        \\ \bottomrule
\end{tabular}
\caption{\label{tab:hotpot}Performance on zero-shot supporting fact (supporting sentence) prediction by models trained with SQuAD1.1.
BLANC outperforms all other models.
}
\vspace{-1.5em}
\end{table}

\begin{figure}[]
    \centering
    \begin{subfigure}[b]{.23\textwidth}
        \includegraphics[width=\linewidth]{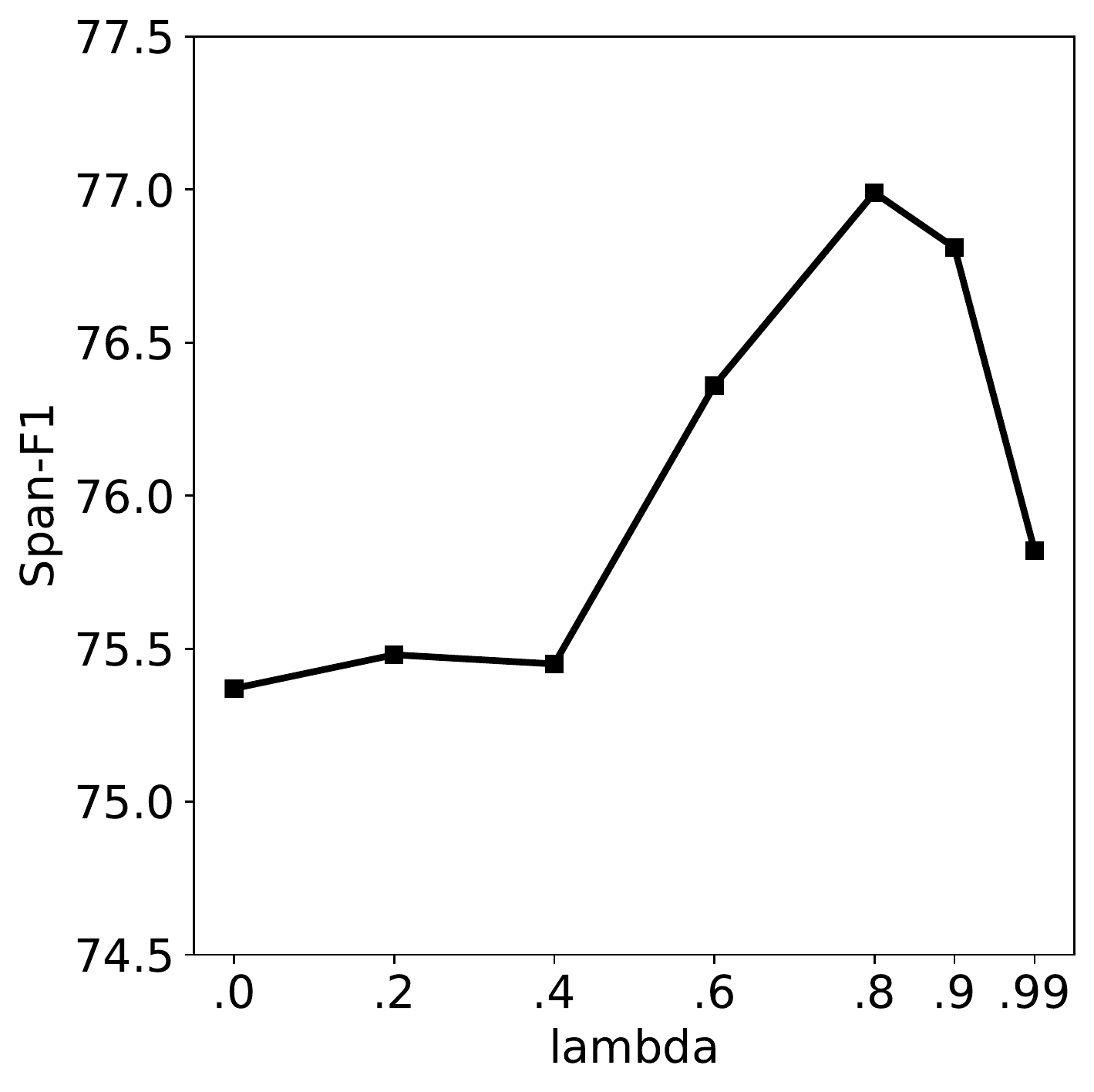}
        \label{fig:abl_span_f1}
        \vskip -0.2in
        \caption{Span-F1}
    \end{subfigure}
    \begin{subfigure}[b]{.23\textwidth}
        \includegraphics[width=\linewidth]{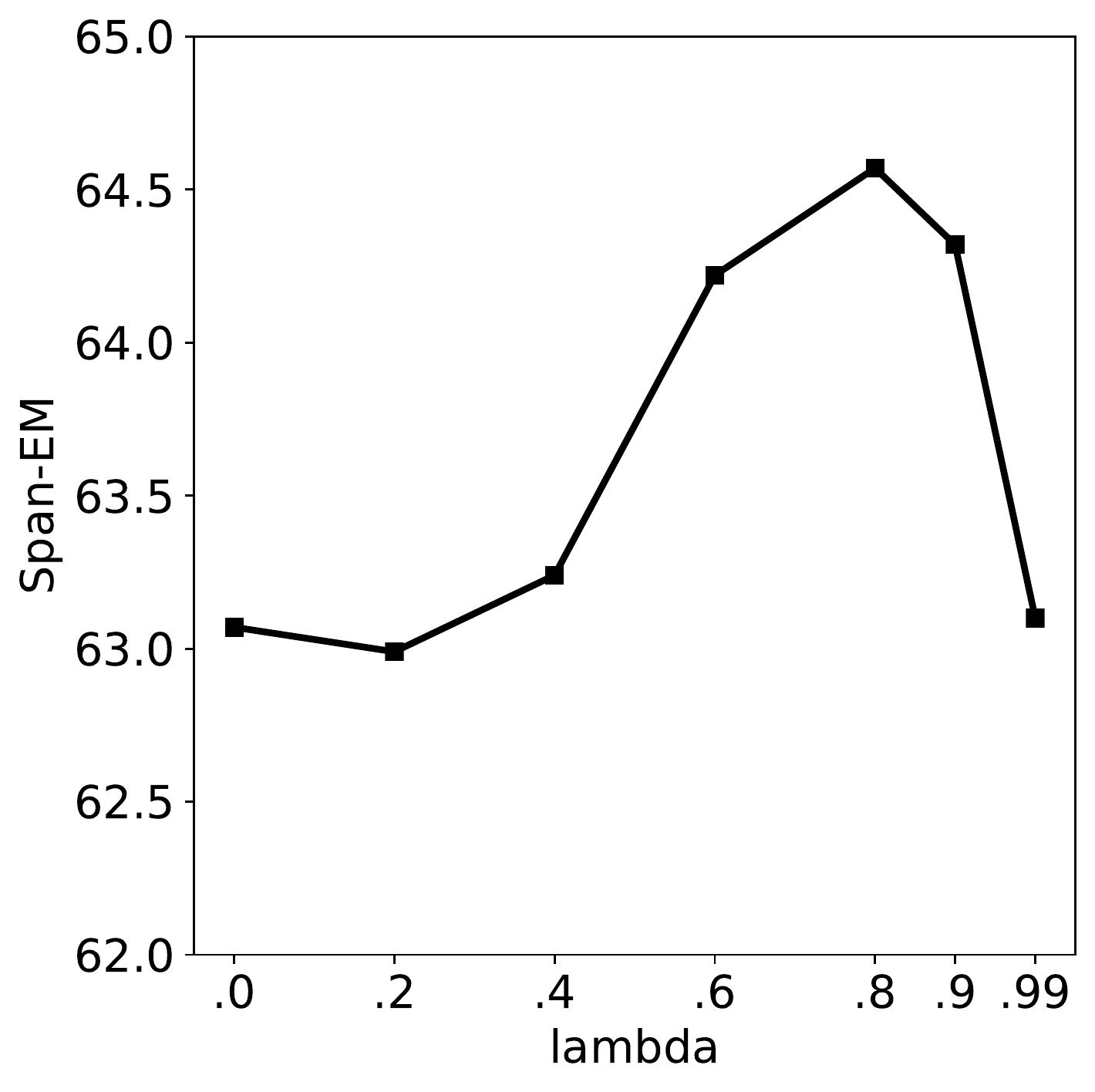}
        \label{fig:abl_span_em}\vskip -0.2in
        \caption{Span-EM}
    \end{subfigure}
    \caption{\label{fig:abl_lmb}
    Analysis on $\lambda$ for context word prediction for NaturalQ. We adjust $\lambda$, weight of ($L_{\text{context}}$), from $0.0$ to $0.99$ and report Span-F1 and Span-EM. Increasing $\lambda$ improves answer-span prediction until $\lambda=0.8$ and then decreases. This decrease is expected as the weight for ($L_{\text{answer}}$) becomes too small.}
    \vspace{-0.5em}
\end{figure}

\subsection{Supporting Facts Prediction}\label{section:supporting_facts_exp}
We present the results of the zero-shot supporting facts prediction task on HotpotQA dataset \cite{yang2018hotpotqa} in Table \ref{tab:hotpot}.
HotpotQA has ten passages and two supporting facts (sentences) for each question-answer pair.
Since HotpotQA has a different data format than the extractive QA datasets, we curate HotpotQA with the following steps.
We concatenate the ten passages to make one passage.
Two supporting facts exist in the passage.
By removing each one of them, we build two passages and each of passage contains one supporting fact.
We repeat this process for all examples in HotpotQA.
As a result, the curated dataset contains triples of one question, one supporting fact, and one passage.
We report the accuracy of models by checking if the supporting fact includes the predicted span.
We train baseline models and BLANC on SQuAD1.1 and test on the curated development set of HotpotQA dataset.
Table \ref{tab:hotpot} shows that BLANC captures sentence relevant to the given question better than other baseline models in zero-shot setting. This result shows that BLANC is capable of applying what it has learned from one dataset to predicting the context of an answer to a question in another dataset.

\begin{table}[]
\centering
\begin{tabular}{@{}lcc@{}}
\toprule
         & Train & Inf   \\ \midrule
BERT     & 1.00x & 1.00x \\
$\text{ALBERT}_{\text{large}}$   & 1.42x & 1.89x \\
RoBERTa  & 1.01x & 1.02x \\
SpanBERT & 1.00x & 1.00x \\
BLANC      & 1.04x & 1.00x \\ \bottomrule
\end{tabular}
\caption{\label{tab:complexity}Training and inference time of each model measured on the same number of QA pairs.}
\vspace{-1.5em}
\end{table}

\subsection{Analysis on $\lambda$}\label{section:abl}
We verify the relationship between reading comprehension performance and context word prediction task by conducting reading comprehension experiment with $\lambda=[0.2, 0.4, 0.6, 0.8, 0.9, 0.99]$.
The hyperparameter $\lambda$ represents weight of $L_{\text{context}}$ in the total loss function $L_{\text{total}}$.
Figure \ref{fig:abl_lmb} shows that the performance increases as $\lambda$ increases until it reaches $0.8$ and decreases after $\lambda=0.8$.
Leveraging the context word prediction task increases reading comprehension performance, and we show efficacy of BLANC.
As $\lambda$ increases, the weight on  $L_{\text{answer}}$ decreases, so we expect to see a decrease in performance as $\lambda$ becomes too large.

\subsection{Space and Time Complexity}
The additional parameters of block attention model come from Eq. (\ref{eqn:context_pred}) in Section \ref{section:ba}.
The number of parameters is $(768+1)*2=1538$ when the hidden dimension size of the transformer encoder is 768, and $1538$ is negligible considering the total number of parameters in BERT-base (108M).
The exact numbers of parameters of baseline models are presented in Table \ref{tab:main}.
Table \ref{tab:complexity} shows relative training and inference time of baseline models and BLANC.
We measure each model's train time on the same number of train steps and the inference time on the same number of passage-question pairs.
Since we use the 24-layer ALBERT-large model which has twice as many layers as other models, ALBERT requires the longest training/inference time,
despite its much smaller model size.
BLANC requires 4\% extra training time which includes the time to generate the soft-labels in (\ref{eqn:pself}) and the time to calculate the context word distribution in (\ref{eqn:context}).
For inference, BLANC requires negligible additional time on SpanBERT.
\section{Conclusion}
In this paper, we showed the importance of predicting an answer with the correct context of a given question.
We proposed BLANC with two novel ideas: context word prediction task and a block attention method that identifies an answer within the context of a given question.
The context words prediction task labels latent context words with the labeled answer-span and is used in a multi-task learning manner.
Block attention models the latent context words with negligible extra parameters and training/inference time.
We showed that BLANC increases reading comprehension performance, and we verify that the performance gain increases for complex examples (i.e., when the answer occurs two or more times in the passage).
Also, we showed the generalizability of BLANC and its context-aware performance with the zero-shot supporting fact prediction task on the HotpotQA dataset.
\section*{Acknowledgements}
This work was partly supported by NAVER Corp. and Institute for Information \& communications Technology Promotion(IITP) grant funded by the Korea government(MSIP) (2017-0-01780, The technology development for event recognition/relational reasoning and learning knowledge based system for video understanding).

\bibliography{emnlp2020}
\bibliographystyle{acl_natbib}

\appendix

\section{Properties of Block Attention}
\subsection{Block Attention on a Soft-label}\label{section:appendix_softlabel}
\begin{theorem}
    There exist two probability distributions, $p(i = s_c)$ and $p(i = e_c)$, that makes $p(\text{w}_i \in \mathcal{C})$ equal to 
    ${p_{\text{soft}}(\text{w}_i \in \mathcal{C})}$, which is defined as follows:
    \begin{equation}
        p_{\text{soft}}(\text{w}_i \in \mathcal{C}) =
        \begin{cases}
            1.0 & \text{if } i \in [s_a, e_a]\\
            q^{|i-s_a|} & \text{if } s_w \le i < s_a\\
            q^{|i-e_a|} & \text{if } e_a < i \le e_w\\
            0.0 & \text{if } i < s_w \text{~~or~~ } i > e_w
        \end{cases}.
    \end{equation}
    Here, $q$ is the decreasing ratio, which satisfies $q \le 1.0$.
    $s_a$ and $e_a$ are the start and end indices of an answer-span.
    $s_a$ and $e_a$ satisfy $s_a \le e_a$. 
    $s_w$ and $e_w$ are the start and end indices of the segments bounded by certain window-size.
    $s_w$ and $e_w$ satisfy $s_w \le s_a$ and $e_a \le e_w$.
\end{theorem}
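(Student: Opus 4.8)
The plan is to give a direct constructive proof: I would exhibit explicit distributions $p(i = s_c)$ and $p(i = e_c)$ and verify that the product of their cumulative sums, namely $p(\text{w}_i \in \mathcal{C}) = p(i \ge s_c)\,p(i \le e_c)$, reproduces $p_{\text{soft}}(\text{w}_i \in \mathcal{C})$. The guiding observation is that $p_{\text{soft}}$ factors naturally into a left ramp, to be carried by $p(i \ge s_c)$, and a right ramp, to be carried by $p(i \le e_c)$, because the plateau $[s_a,e_a]$ is nonempty (we are given $s_a \le e_a$) and cleanly separates the two decaying tails. Concretely, I would define the two cumulative functions
\[
p(i \ge s_c) =
\begin{cases}
0 & i < s_w\\
q^{\,s_a - i} & s_w \le i < s_a\\
1 & i \ge s_a
\end{cases},
\qquad
p(i \le e_c) =
\begin{cases}
1 & i \le e_a\\
q^{\,i - e_a} & e_a < i \le e_w\\
0 & i > e_w
\end{cases},
\]
and recover the underlying point masses by differencing, $p(i = s_c) = p(i \ge s_c) - p(i-1 \ge s_c)$ and $p(i = e_c) = p(i \le e_c) - p(i+1 \le e_c)$, consistent with the cumulative definitions in Section~\ref{section:ba}.

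Second, I would verify $p(i \ge s_c)\,p(i \le e_c) = p_{\text{soft}}(\text{w}_i \in \mathcal{C})$ by casework on the five regions $i < s_w$, $s_w \le i < s_a$, $s_a \le i \le e_a$, $e_a < i \le e_w$, and $i > e_w$. In each region exactly one factor is non-trivial while the other equals $1$: on the plateau both factors equal $1$, giving $1$; on the left ramp $p(i \le e_c) = 1$ so the product is $q^{\,s_a - i} = q^{|i-s_a|}$; on the right ramp $p(i \ge s_c) = 1$ so the product is $q^{\,i - e_a} = q^{|i-e_a|}$; and outside the window one factor is forced to $0$, giving $0$. Thus the product collapses to exactly the piecewise value required by the theorem, with no interaction between the two tails precisely because the plateau is nonempty.

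The main obstacle — and the only place the hypothesis $q \le 1$ is genuinely needed — is checking that these cumulative functions actually arise from legitimate probability distributions. Here I would argue that $p(i \ge s_c)$ is non-decreasing and runs from $0$ (for $i < s_w$) up to $1$ (for $i \ge s_a$), and dually that $p(i \le e_c)$ is non-increasing from $1$ down to $0$; monotonicity of the geometric ramp $q^{\,s_a - i}$ toward $1$ as $i \to s_a$ uses $0 \le q \le 1$. Differencing then yields point masses whose interior increments have the form $q^{k}(1-q) \ge 0$ and whose boundary masses are $q^{\,s_a - s_w}$, $1-q$, $q^{\,e_w - e_a}$, each non-negative, and which telescope to total mass $1$. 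I would finally dispatch the degenerate cases separately: when the window is tight ($s_w = s_a$ or $e_a = e_w$) a ramp region is empty, and when $s_a = e_a$ the plateau reduces to a single index; in each the same formulas remain valid with the relevant terms simply vanishing, so the construction and both verifications go through unchanged.
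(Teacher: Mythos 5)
Your construction is exactly the paper's: the same two cumulative functions (a geometric ramp rising to $1$ at $s_a$, and its mirror falling from $1$ after $e_a$), with the product verified by the same regionwise casework, so the proposal is correct and takes essentially the same approach. The only difference is that you additionally check, by differencing, that the point masses $q^{s_a-s_w}$, $q^{k}(1-q)$, $1-q$ are non-negative and telescope to total mass $1$ (and handle the degenerate windows) --- a legitimacy step the paper leaves implicit behind the remark that the block attention model ``can predict any form of $p(i=s_c)$ and $p(i=e_c)$,'' so your write-up is, if anything, slightly more complete.
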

\begin{proof}
Based on the independent assumption between $s_c$ and $e_c$ in section \ref{section:ba}, $p(\text{w}_i \in \mathcal{C})$ becomes multiplication of two probability distributions as follows:
\begin{equation}
    p(\text{w}_i \in \mathcal{C}) = p(i \ge s_c) \times p(i \le e_c).
\end{equation}
Then, the following two cumulative distributions, $p(i \ge s_c)$ and $p(i \le e_c)$, make $p(\text{w}_i \in \mathcal{C})$ equal to $p_{\text{soft}}(\text{w}_i \in \mathcal{C})$:
\begin{equation}
    p(i \ge s_c) =
    \begin{cases}
        0.0 & \text{if } i < s_w\\
        p_{\text{soft}}(\text{w}_i \in \mathcal{C}) & \text{if } s_w \le i < s_a \\
        1.0 & \text{if } s_a \le i
    \end{cases},
\end{equation}
\begin{equation}
    p(i \le e_c) =
    \begin{cases}
        1.0 & \text{if } i \le e_a\\
        p_{\text{soft}}(\text{w}_i \in \mathcal{C}) & \text{if } e_a < i \le e_w \\
        0.0 & \text{if } e_w < i
    \end{cases}.
\end{equation}
Since block attention method can predict any form of $p(i=s_c)$ and $p(i=e_c)$, any soft-label can be represented by block attention method.
\end{proof}

\subsection{Block Attention on Multiple Spans}
Block attention model can be expanded to predict multiple spans.
\begin{theorem}
    Any form of the following ${p_{\text{multi-span}}(\text{w}_i \in \mathcal{C})}$, which has m-blocks, can be represented by the multiplication of a scaling factor, $k$, and the probability distribution calculated by block attention model, $p(\text{w}_i \in \mathcal{C})$.
    \begin{equation}
    p_{\text{multi-span}}(\text{w}_i \in \mathcal{C}) =
    \begin{cases}
        a & \text{if } i \in \mathcal{B}_1 \vee ... \vee i \in \mathcal{B}_m \\
        \epsilon & \text{otherwise}
    \end{cases}.
\end{equation}
Here, $\mathcal{B}_i$ is the set of indices of the $i$-th span, $\mathcal{B}_i = [s^b_i, e^b_i]$. $s^b_i$ and $e^b_i$ are the start and end indices of $\mathcal{B}_i$.
$\mathcal{B}_i$ satisfies $s^b_i \le e^b_i$ and $e^b_i < s^b_{i+1}$ for all $i$.
\end{theorem}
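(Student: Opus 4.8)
The plan is to mimic the single-block construction of the preceding theorem: I will exhibit an explicit start-distribution $p(i=s_c)$ and end-distribution $p(i=e_c)$ whose induced cumulatives $p(i \ge s_c)$ (non-decreasing, ending at $1$) and $p(i \le e_c)$ (non-increasing, starting at $1$) multiply to a function agreeing with $p_{\text{multi-span}}(\text{w}_i \in \mathcal{C})$ after rescaling by $k$. Since, as already noted, block attention can realize any such pair of distributions, exhibiting them suffices to establish the claim.

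First I would pass to the logarithm and read the target as a decomposition problem. Writing $p(\text{w}_i \in \mathcal{C}) = p(i\ge s_c)\,p(i\le e_c)$, the requirement $p_{\text{multi-span}} = k\,p(\text{w}_i\in\mathcal{C})$ says that $\log p(i\ge s_c) + \log p(i\le e_c)$ must equal a sequence taking the higher value $\log(a/k)$ on every block $\mathcal{B}_j$ and the lower value $\log(\epsilon/k)$ on every gap between and outside the blocks. Because this target alternates up--down as $i$ sweeps across the passage, I would route each upward transition (entering a block at $s^b_j$) entirely into the non-decreasing factor $\log p(i\ge s_c)$ and each downward transition (leaving a block just after $e^b_j$) entirely into the non-increasing factor $\log p(i\le e_c)$, holding both factors constant elsewhere. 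Concretely, this places all the mass of $p(i=s_c)$ at the block starts and all the mass of $p(i=e_c)$ at the positions immediately after the block ends.

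The next step is bookkeeping. With this routing, $p(i\ge s_c)$ depends on $i$ only through the number of blocks already started and $p(i\le e_c)$ only through the number already finished, so the product is one constant $c_{\text{blk}}$ throughout every block and one constant $c_{\text{gap}}$ throughout every gap, with ratio $c_{\text{blk}}/c_{\text{gap}} = a/\epsilon$. I then fix the remaining global scale by choosing $k$ so that $k\,c_{\text{blk}} = a$, which forces $k = a^m/\epsilon^{m-1}$ and automatically yields $k\,c_{\text{gap}} = \epsilon$. It remains to check that the two constructed sequences are genuine probability distributions: monotonicity holds by construction, their cumulatives reach $1$ at the required endpoints (so the point masses are non-negative and sum to one), and every value stays in $[0,1]$; these are routine verifications.

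The main obstacle is conceptual rather than computational: a product of one non-decreasing and one non-increasing factor must rise on entering a block and fall on leaving it, and reproducing several separated bumps this way looks impossible at first glance. The resolution---and the reason the statement carries an $\epsilon$ rather than a hard $0$---is that for $m \ge 2$ a strictly positive gap value is unavoidable: on any gap sandwiched between two blocks the increasing factor is already $\ge$ its (positive) value on the left block and the decreasing factor is $\ge$ its (positive) value on the right block, so their product cannot vanish. Thus $\epsilon>0$ (with $a>\epsilon$) is precisely what makes the construction feasible, and the scale factor $k$ is what reconciles the globally non-unit normalization of the product with the prescribed levels $a$ and $\epsilon$.
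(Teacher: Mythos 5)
Your construction coincides with the paper's proof: geometric level sets $(\epsilon/a)^{m-j}$ for the non-decreasing factor jumping at each block start $s^b_j$, $(\epsilon/a)^{j}$ for the non-increasing factor dropping as $i$ passes each block end $e^b_j$, and the identical scaling factor $k = \epsilon\,(a/\epsilon)^{m} = a^{m}/\epsilon^{m-1}$, so this is essentially the same argument (with your closing observation about why $\epsilon > 0$ is forced being a nice addition the paper omits). One wording nit: under the convention $p(i \le e_c) = \sum_{j \ge i} p(j = e_c)$, the point mass of $e_c$ must sit at the block ends $e^b_j$ themselves so that the cumulative drops at $e^b_j + 1$ --- placing it literally ``immediately after the block ends'' would leak the block value $a$ onto the first gap position, though your subsequent bookkeeping and value of $k$ are consistent with the correct placement.
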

\begin{proof}
    Following two cumulative distributions and the scaling factor make $k \times p(i \ge s_c) \times p(i \le e_c)$ equal to $p_{\text{soft}}(\text{w}_i \in \mathcal{C})$ for all $i$.
    \begin{equation}
        p(i\ge s_c) = 
        \begin{cases}
            (\frac{\epsilon}{a})^{m} & \text{if } i < s^b_1\\
            (\frac{\epsilon}{a})^{m-j} & \text{if } s^b_j \le i < s^b_{j+1} \text{; } j \in [1, m)\\
            1.0 & \text{if } s^b_m \le i
        \end{cases}
    \end{equation}
    \begin{equation}
        p(i\le e_c) = 
        \begin{cases}
            1.0 & \text{if } i \le e^b_1\\
            (\frac{\epsilon}{a})^{j} & \text{if } e^b_j < i \le e^b_{j+1} \text{; } j \in [1, m)\\
            (\frac{\epsilon}{a})^{m} & \text{if } i > e^b_m\\
        \end{cases}
    \end{equation}
    \begin{equation}
        k = \epsilon \Big(\frac{a}{\epsilon}\Big)^{m}
    \end{equation}
    Since block attention model can predict any form of $p(i=s_c)$ and $p(i=e_c)$, $p_{\text{multi-span}}(w_i \in \mathcal{C})$ can be represented by the multiplication of a scaling factor and the probability distribution calculated by block attention model.
\end{proof}

\section{Semantic Similarity Between Context Words and Questions}
Soft-labeling method assumes that words near an answer-span are likely to be included in the context of a given question.
We provide the basis of this assumption with the question-word similarity experiment.
The question-word similarity is calculated with the cosine similarity between word vectors and question vectors.
We use word2vec vectors and calculate the question vectors by averaging word vectors in the questions.
Figure \ref{fig:qa_cos_sim} shows that words adjacent to the answer-spans have the most similar meaning to given questions.
Also, the similarity decreases as the distance between the words and the answer-spans increases.
From the results, we verify the assumption.
\begin{figure}[]
    \centering
    \includegraphics[width=0.8\linewidth]{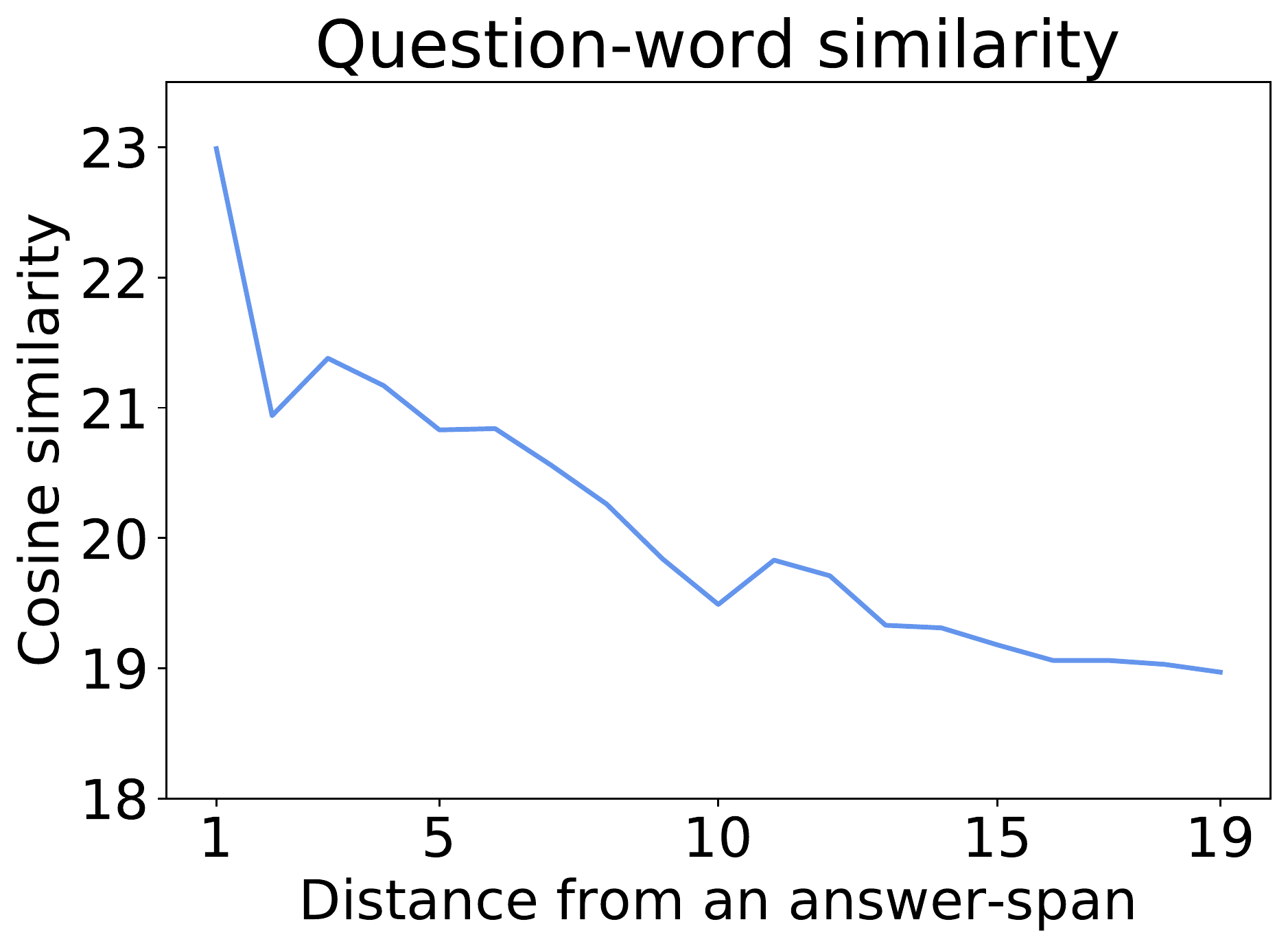}
    \label{fig:qa_cos_sim}
    \caption{\label{fig:qa_cos_sim} The semantic similarity between a given question and words in a passage. The x-axis represents the distance between a word and an answer-span. The y-axis represents the cosine similarity between the question and the word on 100 scale. Words near an answer-span are likely to have a similar meaning to a given question.}
\end{figure}

\section{Details about Hyperparameter Settings}
We vary window-size, and $\lambda$ to find the optimal hyperparameters of BLANC.
\subsection{Analysis on Window-size}
Table \ref{tab:ws} shows the performance of BLANC trained on NaturalQuestions with $\text{window-size} = [1, 2, 3, 4, 5, 7, 21]$.
AVG represents the average of the four performances.
BLANC shows the best AVG performance at $\text{WS}=3$, and we set window-size to 3 for NaturalQuestions and NewsQA experiments.

\begin{table}[]
\centering
\begin{tabular}{@{}cccccc@{}}
\toprule
WS                      & Span-F1 & Span-EM & F1    & EM    & AVG   \\ \midrule
\multicolumn{1}{c|}{1}  & 77.06   & 64.52   & 80.02 & 68.04 & 72.41 \\
\multicolumn{1}{c|}{2}  & 75.95   & 63.35   & 79.80 & 67.84 & 71.73 \\
\multicolumn{1}{c|}{3}  & 76.99   & 64.41   & 80.05 & 68.38 & 72.45 \\
\multicolumn{1}{c|}{4}  & 76.38   & 63.96   & 80.09 & 68.44 & 72.21 \\
\multicolumn{1}{c|}{5}  & 77.01   & 64.33   & 80.02 & 68.04 & 72.35 \\
\multicolumn{1}{c|}{7}  & 76.37   & 64.19   & 79.81 & 68.11 & 72.12 \\
\multicolumn{1}{c|}{21} & 76.65   & 64.14   & 79.96 & 68.06 & 72.20 \\ \bottomrule
\end{tabular}
\caption{\label{tab:ws} The performance of BLANC on NaturalQuestions. We vary window-size to find the optimal context size. AVG represents the average of the four performances.}
\end{table}

\subsection{Varying $\lambda$ on SQuAD1.1}
Table \ref{tab:lmb} shows the performance of BLANC with two different $\lambda$ settings on SQuAD1.1.
The results show that BLANC performs better at $\lambda=0.2$ than $\lambda=0.8$ (the optimal value for NaturalQuestions) on SQuAD1.1.
We set $\lambda$ to 0.2 in SQuAD1.1 experiments.

\begin{table}[]
\centering
\begin{tabular}{@{}ccc@{}}
\toprule
$\lambda$                   & Span-F1         & Span-EM         \\ \midrule
\multicolumn{1}{c|}{0.2} & 88.42 $\pm$ 0.17 & 76.26 $\pm$ 0.31 \\
\multicolumn{1}{c|}{0.8} & 88.30 $\pm$ 0.16 & 75.71 $\pm$ 0.30 \\ \bottomrule
\end{tabular}
\caption{\label{tab:lmb} The performance of BLANC on SQuAD1.1 with two different $\lambda$ settings.}
\end{table}

\end{document}